\documentclass[12pt]{article}  
\usepackage{graphicx, amsthm, amssymb, xcolor, amsmath, caption, xcolor} 
\usepackage[margin=1in]{geometry}  
\usepackage{hyperref}
\usepackage[numbers]{natbib}  
\usepackage{amsmath, amssymb}
\usepackage{graphicx}
\usepackage{algorithm}
\usepackage{algpseudocode}
\usepackage{amsmath,amsthm,amssymb}
\usepackage{amsthm}
\newtheorem{theorem}{Theorem}[section]

\title{Graph Neural Network-Informed Predictive Flows for Faster Ford-Fulkerson and PAC-Learnability}
\date{}
\author{%
Eleanor Wiesler\textsuperscript{*,1,2}
\and
Trace Baxley\textsuperscript{*,1,2}
}

\begin{document}

\maketitle

\begingroup
\renewcommand{\thefootnote}{\arabic{footnote}}
\setcounter{footnote}{0}
\footnotetext[1]{Harvard Department of Mathematics.}
\footnotetext[2]{Harvard Department of Computer Science.}
\endgroup

\begingroup
\renewcommand{\thefootnote}{\fnsymbol{footnote}}
\setcounter{footnote}{1}
\footnotetext{Joint first authors (equal contribution).}
\endgroup

\section{Abstract}
We propose a novel learning-augmented framework for accelerating max-flow computation and image segmentation by integrating Graph Neural Networks (GNNs) with the Ford–Fulkerson algorithm. Our approach departs from prior work on predictive flow initialization by instead learning edge importance probabilities that guide augmenting path selection. We introduce a Message Passing Graph Neural Network (MPGNN) architecture that jointly learns node and edge embeddings through a mutually dependent update mechanism, enabling the model to capture both global structural context and local flow dynamics such as residual capacity, bottlenecks, and cut structure.

Given an input image, we construct a grid-based flow network with source and sink nodes, extract node and edge features, and perform a single GNN inference to assign probabilities to edges indicating their likelihood of belonging to high-capacity cuts. These probabilities are stored in a priority queue and used to guide a modified Ford–Fulkerson procedure that prioritizes high-value augmenting paths via an adjusted Edmonds–Karp search with bottleneck-aware tie-breaking. This design avoids repeated inference over residual graphs while still leveraging learned structural insights throughout the optimization process.

We further introduce a bidirectional path construction strategy centered on high-probability edges and provide a theoretical framework relating prediction quality to algorithmic efficiency via a weighted permutation distance metric. Our method preserves optimality of the max-flow/min-cut solution while significantly reducing the number of augmentations required in practice. Additionally, we outline a hybrid extension that combines flow warm-starting with edge-priority prediction. This work establishes a foundation for learning-guided combinatorial optimization for  application in efficient image segmentation.
\section{Introduction}

The Ford-Fulkerson algorithm is a classical method for computing the maximum flow in a network from a designated source node to a sink node. In its simplest form, a flow network is a directed graph in which each edge is assigned a capacity, indicating the maximum flow it can carry. A residual graph is constructed during the algorithm’s execution to track remaining capacities and facilitate augmenting path selection. The algorithm proceeds by repeatedly finding augmenting paths in the residual graph and pushing flow along them until no such paths remain, at which point the current flow is guaranteed to be maximal. While elegant and widely used, the algorithm’s speed and performance heavily depends on the choice of starting flow and augmenting paths—making it a prime candidate for informed guidance from learning-based methods.

Recent work by Davies et al \cite{pmlr-v202-davies23b} improves the speed of the classical Ford-Fulkerson algorithm for computing maximum flows by seeding Ford-Fulkerson with predicted flows generated by a PAC-learning framework.  We wish to learn optimal flows with GNNs  to both seed the algorithm with geometry-informed starting predictions and, secondly, use GNN-based prediction to inform edge-level predictions for augmenting path finding. 

Recent work has focused on augmenting such algorithms with machine-learned predictions to improve their performance.
GNNs have been used for combinatorial optimization problems including shortest paths, matching, and routing. Several works show that GNNs can approximate solutions to flow problems with high accuracy, and generalize across graph and network sizes and distributions \cite{gui2023flowx}, \cite{busch2021nf}, \cite{chen2024gflow}, \cite{he2022flow}. Our work seeks to add to the PAC-learning paradigm used in \cite{pmlr-v202-davies23b}. We introduce a variety of theoretical results on the learnability of edge selection functions under the Probably Approximately Correct (PAC) framework. Specifically, we model the problem of selecting high-utility edges (i.e., edges frequently appearing in optimal augmenting paths) as a multi-class classification problem over graph edges, where the target is to label edges as "useful" or "not useful" for augmenting flow. Furthermore, we introduce a variety of GNN-informed algorithms for both warm-start of Ford Fulkerson and augmenting path finding, including extensive work on ideal GNN architectures for such algorithms for application in image segmentation. Together, our work seeks to contribute to a new body of work for improving Ford-Fulkerson speed with learning-augmented algorithms.

\subsection{Ford-Fulkerson}

The Ford-Fulkerson algorithm is a classical primal method for computing the maximum flow in a capacitated network. Given a directed graph $G = (V, E)$ with integral capacities $c : E \rightarrow \mathbb{Z}_{\geq 0}$ and designated source and sink nodes $s, t \in V$, the algorithm incrementally constructs a feasible flow $f$ by identifying augmenting paths in the residual graph $G_f$ and pushing flow along them. The process terminates when no further augmenting path exists, at which point the flow is guaranteed to be maximal by the Max-Flow Min-Cut Theorem. While its correctness and optimality are well established, the practical performance of Ford-Fulkerson depends critically on the order and structure of augmenting paths. This motivates algorithmic enhancements that can warm-start the method or guide path selection heuristics using learned predictions. Recent work (e.g., Davies et al. 2023) formalizes this direction under a PAC-learning framework, seeding Ford-Fulkerson with approximate flows to reduce convergence time. In our work, we extend this paradigm by integrating graph neural network (GNN) models to predict not only warm-start flows but also edge-level priorities for augmenting path construction. The hypothesis of this paper is that these heuristics can improve practical runtime without sacrificing optimality.

\subsection{Image Segmentation with Ford-Fulkerson}

Graph-based image segmentation is a canonical setting for evaluating combinatorial flow algorithms. An image is represented as a grid graph where each pixel corresponds to a node, and edges encode pairwise affinities derived from local intensity differences. Two special nodes—source ($s$) and sink ($t$)—represent the object and background terminals, with seeds specifying hard constraints. Edges from $s$ and $t$ to the pixel nodes are assigned high capacities based on unary priors, and internal edges are weighted using boundary-sensitive functions (e.g., $w_{pq} = C \exp(-\frac{(I_p - I_q)^2}{2\sigma^2})$). Segmenting the image reduces to solving a max-flow problem, with the min-cut delineating the object from the background. While Ford-Fulkerson is not the most advanced method for image segmentation, its application here is illustrative: the segmentation result serves as a concrete visualization of the max-flow solution, and the grid-like structure of image graphs makes them ideal for benchmarking flow algorithms with spatial coherence.

In this paper, we do not aim to advance the state-of-the-art in segmentation quality. Rather, we use the segmentation task as a controlled domain to validate algorithmic enhancements to Ford-Fulkerson. Specifically, we deploy GNN-informed strategies—both warm-start initialization via predicted flows, and edge-prioritized augmenting paths—to reduce the number of augmenting iterations. The segmentation setting provides both structural regularity and visual interpretability, allowing us to empirically measure the impact of learned predictions on flow runtime and algorithmic efficiency.

\subsection{PAC-Learnability in Relation to GNN-Informed Ford Fulkerson}

Given that the warm-start flow algorithm for Ford-Fulkerson used a new prediction algorithm (basing the prediction on total flow), it is important to know if this value is PAC-learnable. Some sets that are not learnable include the set of all Boolean functions -- an extremely large amount of labeled training examples are required to get a sufficient prediction of the specific boolean function. This introduces the idea of PAC-learning: 
\\

A hypothesis class $\mathcal{H}$ is Probably Approximately Correct (PAC) learnable if: For any distribution $D$, any accuracy $\epsilon > 0$, and confidence $\delta > 0$, there exists polynomial $m(\epsilon,\delta)$ such that with $m$ examples, the algorithm outputs $h \in \mathcal{H}$ satisfying with probability $\geq 1-\delta$:

    $$\text{Error}(h) \leq \text{Best possible error} + \epsilon$$

in time $poly(n, \frac{1}{\epsilon}, \frac{1}{\delta})$
\\

Essentially, for algorithms with predictions, we want the object that is the "learned" part of the algorithm to be PAC learnable. With complex objects like graphs, this is not always obvious ans similarly there has never been a algorithms with predictions for predicting edge probability for satisfying some given some $G$ for graphs for flow-problems. In particular, our function requires us to take the $\arg \max$, which adds additional complexity. In determining the PAC-learnability of edge probabilities, one can . Specifically, with a small enough training size, we may discover that less "powerful" algorithms than GNNs may also predict edge probabilities. Throughout our PAC-learning section, we may reference the VC dimension, which is essentially a gauge on how complex the model is by the points the model can shatter. Other dimensions included in this paper have basis in VC dimension but are used for arguments when the hypothesis class does not include Boolean functions.  

\subsection{Our Contributions}

This paper presents a learning-augmented approach to accelerating the Ford–Fulkerson algorithm by leveraging graph neural networks (GNNs) for both initialization and augmenting path selection. Our main contributions are as follows:

\begin{enumerate}
    \item Theoretical results on PAC learnability: We formalize the learnability of edge selection under PAC-learning theory. Specifically, we show that selecting high-utility edges based on graph metrics is PAC-learnable and derive bounds on sample complexity using the Natarajan dimension. We further demonstrate that image grid graphs admit tighter PAC bounds than general graphs, making them an effective domain for theoretical validation.

    \item Algorithm 1 – GCN-based warm start: We develop a Graph Convolutional Network (GCN) architecture to predict edge-level flows from image-derived graphs. These predicted flows are used to initialize the Ford–Fulkerson algorithm. This warm-start reduces the number of augmenting steps required by preemptively saturating bottlenecks in the residual graph. The full model design and training pipeline are provided in Section 3.1.

    \item Algorithm 2 – MPGNN-guided edge scoring: We introduce a Message Passing Graph Neural Network (MPGNN) that learns edge importance scores in residual graphs. These scores capture structural and capacity-based features and are used to guide path selection via modified DFS, prioritizing high-probability edges likely to lie in optimal augmenting paths. The architecture jointly learns node and edge embeddings and is described in Section 3.2.

    \item Algorithm 3 – GNN-assisted Ford–Fulkerson with max-heap prioritization: We propose a full pipeline where edge scores predicted by an MPGNN on the initial residual graph are cached in a max-heap. The Ford–Fulkerson algorithm proceeds by iteratively selecting the highest-scoring edge and assembling an augmenting path through adjusted DFS from the source to the head and from the tail to the sink. This strategy enables informed augmentations across the full flow process without repeated inference, yielding reduced iteration counts and runtime improvements.
    
    \item Implementation and codebase: Despite experimental challenges, we provide a codebase supporting most data preprocessing steps for image-to-graph generation, GNN training, and flow computation. The code infrastructure is designed for extensibility and is made available for future reproduction and evaluation.
\end{enumerate}

\section{PAC-Learnability of Metric-Based Selection of Edges}

Davies et al. \cite{davies2023predictive} uses predictions on graph flows as a seed to optimize the original Ford-Fulkerson algorithm. Moreover, they proved that high quality flows are learnable, which is intended to bolster the idea that flow-based predictions can be used. In our framework, we will show that predictions on edge probability for some property (is within the shortest path, will lie in the min-cut) is PAC-learnable, and that using such a prediction will decrease the run-time with reasonable error. 

Throughout this section we will be using the following notation to describe the 

\begin{itemize}
    \item $V(G), E(G)$ is the set of vertices of a graph $G$
    \item $M(G,e)$ is a "metric", which represents a quality score for some edge $e \in E(G)$
    \item $c^*(G) = \arg \max_{e \in E} M(G,e)$, or the "optimal" edge choice $e$
\end{itemize}

Initially we propose a proof that one may create a model to predict an edge that satisfies some group of $M(G,e)$ threshold. For example, some graphs may have edges at object boundaries that are traversed by exponentially many shortest paths or near-shortest paths. Using features to prioritize these edges allows the algorithm to saturate bottlenecks in residual graphs early - which may allow for fewer path searches. 

\subsection{PAC Learning for an Edge Based on Some Set of Metrics}

\begin{theorem}
Edge selection from a set of graphs $G$ is PAC learnable with $m$ examples, where $m \in O\left(\frac{|E| d \log^2\left(\frac{|E|d}{\epsilon}\right) + \log\left(\frac{1}{\delta}\right)}{\epsilon}\right)$

\end{theorem}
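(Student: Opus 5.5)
We treat edge selection as $|E|$-class classification: each graph $G$ is encoded as a vector of per-edge features $x_e(G)\in\mathbb{R}^d$, and the target label is $c^*(G)=\arg\max_{e\in E} M(G,e)$. The hypothesis class is
\[
\mathcal{H}=\Bigl\{\,G\mapsto \arg\max_{e\in E}\langle w_e, x_e(G)\rangle \;:\; w_e\in\mathbb{R}^d\,\Bigr\},
\]
an argmax over $|E|$ scoring functions, each linear in $d$ parameters. The same argument works with one shared $w$ in place of separate $w_e$, which only improves the bound. We assume realizability, i.e.\ that $M$ is captured by such scores, or else work agnostically with the $\epsilon$ vs.\ $\epsilon^2$ adjustment. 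The plan is: (i) bound the Natarajan dimension of $\mathcal{H}$ by $O(|E|d\log(|E|d))$; (ii) invoke the multiclass fundamental theorem, giving sample complexity
\[
m=O\!\left(\frac{d_N\log(|E|)\log(1/\epsilon)+\log(1/\delta)}{\epsilon}\right);
\]
(iii) substitute and simplify to the stated form.

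For step (i), suppose $\mathcal{H}$ Natarajan-shatters $n$ graphs $G_1,\dots,G_n$. Then $\mathcal{H}$ realizes at least $2^n$ distinct label patterns on them. Each argmax prediction on $G_i$ is determined by the signs of the $\binom{|E|}{2}$ pairwise comparisons $\langle w_e,x_e(G_i)\rangle-\langle w_{e'},x_{e'}(G_i)\rangle$. Each comparison is a linear threshold in the concatenated parameter vector $(w_e)_{e\in E}\in\mathbb{R}^{|E|d}$. The number of realizable sign patterns over $n\binom{|E|}{2}$ such hyperplanes in dimension $|E|d$ is at most $\bigl(e\,n|E|^2/(|E|d)\bigr)^{|E|d}$ by Sauer--Shelah, or equivalently Cover's hyperplane-arrangement count. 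Requiring
\[
2^n\le (e\,n|E|/d)^{|E|d}
\]
and solving the inequality $n\le |E|d\log_2(en|E|/d)$ gives $n=O(|E|d\log(|E|d))$. This is the standard Daniely--Shalev-Shwartz bound for multiclass linear predictors.

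For step (iii), plug $d_N=O(|E|d\log(|E|d))$ into the sample bound. The factors $\log(|E|d)$ and $\log(1/\epsilon)$, together with the $\log|E|$ factor coming from the multiclass Sauer lemma, combine under the generous bound $\log a\cdot\log b\le \log^2(ab)$ into $\log^2(|E|d/\epsilon)$. This yields
\[
m=O\!\left(\frac{|E|d\log^2(|E|d/\epsilon)+\log(1/\delta)}{\epsilon}\right).
\]

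The main obstacle is step (i): making the counting of argmax patterns rigorous, and in particular justifying that the pairwise comparisons reduce to linear thresholds in a fixed parameter space. A second difficulty is the modeling assumption that $M$ is expressible via $d$-dimensional linear scores, so that realizability holds. If $M$ is not linear in the features, I would instead bound the pseudo-dimension of the scoring class and use the same argmax reduction, at the cost of constants.
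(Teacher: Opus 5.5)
\textbf{There is a genuine gap in step (iii), the simplification of the logarithmic factors.} After step (ii) your bound carries \emph{three} logarithmic factors. One is $\log(|E|d)$, coming from $d_N=O(|E|d\log(|E|d))$. The other two form the product $\log|E|\cdot\log(1/\epsilon)$, coming from the form $d_N\log(|E|)\log(1/\epsilon)$ you chose for the sample bound. The inequality $\log a\cdot\log b\le\log^2(ab)$ merges two factors, not three. In general $\log(|E|d)\,\log|E|\,\log(1/\epsilon)$ is not $O(\log^2(|E|d/\epsilon))$: take $d=1$ and $|E|=1/\epsilon=N$. Then the left side is $\log^3 N$ and the right side is $4\log^2 N$, and the ratio grows like $\log N$. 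As written, you only obtain $O\bigl((|E|d\log^3(|E|d/\epsilon)+\log(1/\delta))/\epsilon\bigr)$, not the stated bound. Your product form is itself a valid bound (it follows from the graph dimension $d_G=O(d_N\log k)$ and the classical realizable VC bound). However, it is not the form in the Multiclass Fundamental Theorem, and it is exactly what costs you the extra logarithm.

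\textbf{Two ways to repair it.} The first is what the paper does: use the realizable part of the Multiclass Fundamental Theorem in the form $m=O\bigl((d_N\log(kd_N/\epsilon)+\log(1/\delta))/\epsilon\bigr)$ with $k=|E|$. There $\log k$ and $\log(1/\epsilon)$ are summands inside a single logarithm. With $d_N=O(|E|d\log(|E|d))$ you then get $O(|E|d\log(|E|d)\log(|E|d/\epsilon))\subseteq O(|E|d\log^2(|E|d/\epsilon))$. The second keeps your product form but sharpens step (i). Your class is a multiclass linear predictor over a class-sensitive feature map into $\mathbb{R}^{|E|d}$ (the block embedding of $x_e(G)$), so $d_N\le |E|d$ with no logarithm, and $d_N\le d$ for a shared $w$. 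Then $|E|d\log|E|\log(1/\epsilon)\le |E|d\log^2(|E|/\epsilon)$.

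Apart from this, your route to the Natarajan bound differs from the paper's. The paper combines $\mathrm{Pdim}=d$ for linear functionals with the one-versus-all lemma $\mathrm{Ndim}\le 3kd\log(kd)$. You instead count sign patterns of the pairwise-comparison hyperplanes directly. Your counting is sound and fits an argmax-of-linear-scores class more directly than the lemma the paper cites.
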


\begin{proof}

Consider the following Hypothesis Class for an instance space of some family of graphs, which we will call $X$.

$$H_E = \Bigl\{\,h_w : G\mapsto \arg\max_{e\in E(G)}w\!\cdot\!\phi(G,e) \Bigm| w\in\mathbb{R}^d\Bigr\}$$

where $\phi(G,e)$ is a vector with dimension $d$ that contains features that should encode the edge properties referenced by $M$.  

Also, where $X$ is the set of directed graphs on $V$, we define such a family where $n \geq \max_{G \in X} |E(G)|$. 

From this, we then extract our feature map for a graph-edge pair, essentially treating our metrics $M_1(G,e), M_2(G,e), \dots M_d(G,e)$ as features:

$$\phi\colon X \times E_{\max} \to \mathbb{R}^d$$

 $$ \varphi(G, e) = \begin{bmatrix}
    M_1(G,e) \\
    M_2(G,e) \\
    \vdots \\
    M_d(G,e) 
  \end{bmatrix}$$

where $E_{max}$ is the set of all potential edges that could exist in our family of graphs $X$. 

The choice of our feature map does not have to be fully defined, but may contain some of the following features, where $e = (u,v)$: 

\begin{itemize}
  \item \(d_s(u)\): length of a shortest path from source \(s\) to \(u\),
  \item \(d_t(v)\): length of a shortest path from \(v\) to sink \(t\),
  \item \(\deg^+(u)\), \(\deg^-(v)\): out-degree of \(u\), in-degree of \(v\),
  \item \(\Gamma^+(u)\): out-neighbors of \(u\),
  \item \(\Gamma^-(v)\): in-neighbors of \(v\)
  \item \( c_r(e)\): edge capacity
\end{itemize} 

Thus, this leads us to define our concept class in the following way: 

\[
\mathcal{F}= \left\{\,f_w : (G,e) \mapsto w \cdot \phi(G,e) \mid w \in \mathbb{R}^d\,\right\}
\]

A key result is the literature on real-valued function classes is the pseudo-dimension of linear functionals. Recall that the psuedo-dimension (Pdim), generalizes the VC dimension (VCdim) to work with functions in the real numbers, and measures the capacity of a class to shatter real-valued targets. 
The theorem introduced by Anthony and Bartlett et al. is shown below: 
\\

Let $\mathcal{F} = \{ f_w : \mathbb{R}^d \to \mathbb{R} \mid f_w(x) = w \cdot x, \ w \in \mathbb{R}^d \}$ be the class of linear functionals on $\mathbb{R}^d$. Then:
\[
\emph{Pdim}(\mathcal{F}) = d
\]
\\

From knowing this we know that the pseudodimension will grow linearly with the number of metrics we include in our feature vector. However, we must consider the $\arg \max$ portion of our hypothesis class. In order to do this, consider the complexity of mutliclass hypothesis classes formed by the function $\arg\max$. The Natarajan dimension ($Ndim$) is related to the pseudodimension over real valued functions as introduced by Shalev-Schwartz et al in lemma 29.5[].

For a class of multiclass predictors, with each hypothesis having $Pdim(H_{bin}) = d$ where bin indicates the predictor for one of the edges:, we have that the following is true: 

$$Ndim(H) \leq 3kd \log (kd)$$

where $k$ is the number defined by some $\arg \max_{i \in [k]} h_i(x)$. Thus, under these circumstances, predicting edges from graphs via linear classifiers gives us a Natarajan dimension of at most $3 |E_{max}| d \log (|E_{max}|d)$ since we are predicting edges (our argument). We will call $|E_{max}|$, $|E|$ for the rest of this proof. 

Finally, we can determine that an algorithm that encodes a collection of edge metrics with feature vectors is PAC learnable from the Multiclass Fundamental Theorem (29.3):

Specifically, with any input errors $\epsilon$, $\delta$ such that for this algorithm, given a set of random samples of size $O(\frac{3 |E| d \log (|E|d) \log (\frac{|E|3 |E| d \log (|E|d)}{\epsilon}) + \log (\frac{1}{\delta})}{\epsilon})$ picked from some probability distribution $D$, will output with probability $1- \delta$ will pick a multiclass function with error less than $\epsilon$. 

The bound on this random sample can be simplified in the following way: 

$$ O\left(\frac{3 |E| d \log(|E|d) \log\left(\frac{3 |E|^2 d \log(|E|d)}{\epsilon}\right) + \log\left(\frac{1}{\delta}\right)}{\epsilon}\right)$$

$$= O\left(\frac{|E| d \log(|E|d) \log\left(\frac{|E|^2 d \log(|E|d)}{\epsilon}\right) + \log\left(\frac{1}{\delta}\right)}{\epsilon}\right)$$ $$O\left(\frac{|E| d \log(|E|d) \left[2\log|E| + \log d + \log\log(|E|d) + \log\left(\frac{1}{\epsilon}\right)\right] + \log\left(\frac{1}{\delta}\right)}{\epsilon}\right)$$ 

$$= O\left(\frac{|E| d \log(|E|d) \log\left(\frac{|E|d}{\epsilon}\right) + \log\left(\frac{1}{\delta}\right)}{\epsilon}\right)$$ 

$$= O\left(\frac{|E| d \log^2\left(\frac{|E|d}{\epsilon}\right) + \log\left(\frac{1}{\delta}\right)}{\epsilon}\right) $$

Since this bound is $poly(|E|, d, \frac{1}{\delta}, \frac{1}{\epsilon})$, the algorithm is PAC-learnable for metric-based edge selection because it is realizable, and can feasibly be used as a prediction for a learned-Ford Fulkerson algorithm.
\end{proof}
 
\subsection{PAC-Learnability with an Image Segmentation-based Family}

Clearly, note that this bound proven above assumes that the graph family contains graphs with the same number of vertices. Within the flow-based warmstart algorithm, notice that in the implimentation of the Ford Fulkerson algorithm, both the number of vertices and the specific edges remain the same. This is because the seeds in the paper's image sequences for both the foreground and background images stay stagnant, which means that the pixel nodes attaching to the source and sink nodes are the same throughout, meaning that each model only works for a single graph topology. With this assumption, the paper assumes that their algorithm will be useful for networking problems which deal with dynamic edge weights only, which sometimes can be a reasonable assumption for networking problems. However, for problems whether there is subtle dynamism with edges, these warm-start models will not work. A great example of where this discrepancy will lie is with image segmentation of similar objects (in our case, the flower data set).

Essentially, we want to show that if the graph family is sufficiently small, then the algorithm to train the model will require less computational resources. 

Suppose we have \(V\), a fixed vertex set, and let
\[
  G_{|V|} = \{\,G_1,\dots,G_N\}
  \subseteq
  \{\text{directed graphs on }V\}
\]
be a family of graphs that all share the same underlying topology except that each \(G_i\) differs from the others only by which \(k\) vertices are attached to the source \(s\) and which \(k\) to the sink \(t\).  We assume \(\max_{i}|E(G_i)|\le m\).
Moreover, given images of size $m \times m$, we know that $|V| = m^2 + 2 = O(m^2)$ and that $|E| = O(4m^2 + 2k)$, and since $k \leq m^2$, we have that $|E| \in O(m^2)$.

Most importantly, note that for some feature vector $\phi(G,e)$, that the features selected may be continuous (like $c_r(e)$ which is a real number in many cases). They also may be discrete. For example, in the case of a feature that represents whether an edge is in a minimum cut of the graph, we can represent that $\phi(G,e)_i \in \{0,1\}$. Moreover, when measuring something like the in neighbor for a node connected to $e$, for the collection of graphs associated with image segmentation cannot be greater than 5 (4 adjacent pixels and a source or sink node). Thus, we can split our feature map into a stagnant features that take in a finite number of values, and the features that take real values. 

Suppose we have the function class:

$$\mathcal{F} = \{ f_w(G,e) = w \cdot \phi(G,e) \mid w \in \mathbb{R}^d \}$$

which is the class of linear scoring functions. Note that the feature vector contains \(d_{\text{stag}}\) stagnant features, representing the discrete space it contains, and takes up\(c\) discrete values). Moreover, the remainder of the features are real-valued; the number of these are denoted by $d_{real}$ 

Because the function is linear, we can decompose the $w \cdot \phi(G,e)$, which is given by:

$$ w \cdot \phi(G,e)= w_{\text{stag}} \cdot \phi_{\text{stag}}(G,e) + w_{\text{real}} \cdot \phi_{\text{real}}(G,e)$$

 We know that the discrete part of the decomposition is bounded above by 
 
 $$w_{\text{stag}} \cdot \phi_{\text{stag}}(G,e) \leq c^{d_{stag}}$$ 
\\

 While we have not completed this proof, the idea is to show that certain constraints of the graph specific to some graph family will allow for a tighter bound on the required examples for PAC-learning. We have yet do this, and are tending toward a negative result on this proof, as for linear functions, having a finite search space allows for less flexibility, and possible a higher VC dimension, meaning a larger requirement for $m$.   

It seems like domain-specific graph families may be advantageous. Thus, since $|E_max|$ is directly dependent on the properties we can introduce the following lemma related to the graphs used for picture segmentation: 

\begin{theorem}
    Let $G_{grid}$ be a family of $m \times m$ grid graphs such that for $G \in G_{grid}$, Grid-based Graphs require less examples $m$ to satisfy PAC-learning constraints than the standard collection on $|V| = n$ vertices 
\end{theorem}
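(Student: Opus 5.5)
The plan is to reduce the statement to a comparison of the sample-complexity bounds from the previous theorem, evaluated on the two families. That theorem shows edge selection via linear scoring plus $\arg\max$ is PAC learnable with
\[
m \in O\!\left(\frac{|E_{\max}|\, d \log^2\!\left(\frac{|E_{\max}| d}{\epsilon}\right) + \log\!\left(\frac{1}{\delta}\right)}{\epsilon}\right).
\]
This bound depends on the family only through $|E_{\max}|$, the number of candidate labels in the multiclass problem, and through the Natarajan bound $\mathrm{Ndim}(H) \le 3|E_{\max}| d \log(|E_{\max}| d)$. So it suffices to compare $|E_{\max}|$ for the two families. I will take the feature dimension $d$ to be the same in both cases.

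For the general family on $|V| = n$ vertices, the potential edge set is all ordered pairs, so $|E_{\max}| = n(n-1) = \Theta(n^2)$. For $G_{grid}$, I use the count set up just before the statement. Every graph in the family shares the fixed $4$-neighbour grid edges. It differs only in which pixels are attached to $s$ and to $t$. Hence $E_{\max}$ is contained in the union of the grid edges (at most $4m^2$) and all pixel--terminal edges (at most $2m^2$). This gives $|E_{\max}| \le 6m^2 = O(|V|)$, where $|V| = m^2+2$.

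To make the comparison fair I set $n = |V| = m^2+2$. The grid bound is then $O\bigl(n d \log^2(nd/\epsilon) + \log(1/\delta)\bigr)/\epsilon$, while the general bound is $O\bigl(n^2 d \log^2(n^2 d/\epsilon) + \log(1/\delta)\bigr)/\epsilon$. The grid family is thus smaller by a factor of order $n$ in the leading term. I would verify this by substituting into the simplification chain from the previous proof, noting that $\log(n^2 d/\epsilon) \le 2\log(nd/\epsilon)$.

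The main obstacle is that the previous theorem gives only an upper bound. A smaller upper bound does not by itself show that grid graphs ``require fewer examples''. To make the statement rigorous, I would either interpret it as a comparison of these sufficient sample sizes, or add a matching lower bound for the general family. For the lower bound, I would first try the standard multiclass lower bound, which says $\Omega\bigl((\mathrm{Ndim}(H) + \log(1/\delta))/\epsilon\bigr)$ samples are necessary. I would combine it with a lower bound on the Natarajan dimension of linear $\arg\max$ classes of order $\Omega(|E_{\max}|)$ under sufficiently rich (for instance, one-hot edge-identity) features. For the general family this yields $\Omega(n^2/\epsilon)$, which exceeds the $O(n \log^2 n/\epsilon)$ upper bound for grids once $n$ is large.

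If rich features are not available, the fallback is to state the theorem as a strict improvement of the guaranteed sample complexity. That is what the paper's argument supports.
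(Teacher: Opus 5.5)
Your main route is the paper's own. The bound from the first theorem depends on the family only through $|E_{\max}|$, which is $O(n)$ for grids versus $\Theta(n^2)$ in general, and substituting gives the smaller grid bound. Your direct substitution is a cleaner rendering of the paper's $\sqrt{|E|}$ display.

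You are right that this only compares sufficient sample sizes, but the lower-bound repair you propose fails as stated. Because $H_E$ uses a single shared $w\in\mathbb{R}^d$, it is a linear multiclass class $x\mapsto\arg\max_i\langle w,\Psi(x,i)\rangle$ with $\Psi=\phi$. Such classes have Natarajan dimension at most $d$ (Theorem 29.7 of \cite{shalev2014understanding}). Restricting the $\arg\max$ to $E(G)$ does not change that argument, since both witness labels $f_0(G),f_1(G)$ must lie in $E(G)$. So $\mathrm{Ndim}(H_E)=\Omega(|E_{\max}|)$ forces $d\gtrsim|E_{\max}|$, which is why you need one-hot features. But then your comparison uses $d=\Theta(n^2)$ on the general side and $d=O(1)$ on the grid side, contradicting your own equal-$d$ convention. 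If you use one-hot features on the grid side too ($d=\Theta(n)$ even using only grid edges), the first theorem's grid bound becomes $O\bigl(n^2\log^2(n/\epsilon)/\epsilon\bigr)$. That no longer lies below $\Omega(n^2/\epsilon)$, so no separation follows.

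The same fact shows that the order-$n$ gap in the upper bounds is itself an artifact. The factor $|E_{\max}|$ in the first theorem enters only through the one-versus-all estimate $3kd\log(kd)$, which is designed for $k$ independently chosen scorers. Using $\mathrm{Ndim}(H_E)\le d$ in the Multiclass Fundamental Theorem instead gives $O\bigl((d\log(|E_{\max}|d/\epsilon)+\log(1/\delta))/\epsilon\bigr)$. By the very inequality you cite, $\log(n^2d/\epsilon)\le 2\log(nd/\epsilon)$, so for fixed $d$ the grid and general guarantees differ by at most a constant factor. Hence, with $d$ fixed, the argument supports the statement only in your fallback sense: a comparison of the particular bounds produced by the first theorem. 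That is exactly what the paper's proof establishes, and no more.

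If you want a genuine separation, it has to come from the effective feature dimension rather than the number of candidate edges. For example, with one-hot edge features the grid family only ever uses $O(n)$ coordinates, so the sharper bound gives $O\bigl(n\log(n/\epsilon)/\epsilon\bigr)$ for grids. For the general family, disjoint two-edge graphs show $\mathrm{Ndim}=\Omega(n^2)$, which forces $\Omega(n^2/\epsilon)$ samples.
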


\begin{proof}
 The Natarajan dimension depends on $|E_{max}|$, with grid based graphs, the number of edges is of size $O(4n) = O(n)$, as opposed to $O(n^2)$ for a complete graph. Thus, for analysis using the Multiclass Fundamental Theorem, we have that the sample complexity for $G_grid$ graphs should be of the order $O(sqrt{|E_{max}|})$, which is given by: 

  $$ O\left(\frac{\sqrt{|E|} d \log^2\left(\frac{\sqrt{|E|}d}{\epsilon}\right) + \log\left(\frac{1}{\delta}\right)}{\epsilon}\right) $$
\end{proof}

One could make a similar lemma about other common graph types with non-trivial limits on the number of edges, including trees or planar graphs for example.

\section{GNN-Based Algorithms}
\subsection{Warm-start of Ford Fulkerson with Graph Convolution}
In the context of accelerating the Ford--Fulkerson algorithm for max-flow-based image segmentation, a Graph Convolutional Network (GCN)-based architecture offers a principled and efficient learning framework. The task is to predict edge flows on an image-derived graph that includes not only spatially structured pixel nodes, but also explicit source (object terminal) and sink (background terminal) nodes, which connect to subsets of the pixel graph via terminal edges. These edges encode prior beliefs (from seeded masks or unary potentials) about which pixels belong to foreground and background, respectively.

The input graph $G = (V, E)$ is constructed from an image by treating each pixel as a node and connecting it to its 4- or 8-neighborhood to reflect local intensity similarities. However, to model segmentation via min-cut, two terminal nodes $S$ and $T$ are appended to the graph, representing the object and background terminals. Nodes are connected to $S$ or $T$ via source and sink edges with weights derived from seed maps or learned priors, defining unary costs. The internal edges encode boundary penalties, typically based on gradient magnitude or contrast.

Each node is initialized with a feature vector consisting of spatial coordinates $(x, y)$, grayscale intensity, and a seed encoding (e.g., 0 for neutral, 1 for source, $-1$ for sink). These features are passed through three GCNConv layers with ReLU activations. The GCN layers propagate local feature information across the graph, encoding spatial and semantic structure. We limit the depth to three layers to balance receptive field growth and avoid over-smoothing---a known issue where deep GCNs can produce indistinguishable node embeddings.

After generating hidden embeddings $h_v \in \mathbb{R}^d$ for all nodes (including terminals), we predict edge-level flows. For each edge $(u, v) \in E$, the concatenated node embeddings $[h_u \|\| h_v] \in \mathbb{R}^{2d}$ are passed through an edge MLP, a small feedforward neural network consisting of a hidden layer with ReLU and a final scalar output neuron. This allows the model to learn nonlinear dependencies between adjacent node features and asymmetries in flow patterns, especially important near terminal edges.

The model is trained to minimize the mean squared error between predicted flows and ground-truth flows computed via the exact Ford-Fulkerson algorithm. Because of the flow/cut duality, high-accuracy flow predictions implicitly identify minimum cuts: edges near capacity on the predicted flow correspond to bottlenecks in the segmentation. This allows the GNN to learn both the low-level boundary structure and the high-level topological flow of the segmentation mask. The resulting predicted flows are then used to warm-start Ford-Fulkerson, guiding it toward the bottleneck early and reducing the number of augmenting path iterations required to reach the global optimum.

Once the GNN is trained, the full warm-start procedure is as follows:

\begin{algorithm}[H]
\caption{Warm-starting Ford-Fulkerson with GCN-predicted flow}
\begin{algorithmic}[1]
\State \textbf{Input:} Graph $G = (V, E, c)$, trained GNN model $\mathcal{G}_\theta$
\State Predict flow: $\hat{f} \gets \mathcal{G}_\theta(G)$
\While{$\exists$ edge $e$ with $\hat{f}_e > c_e$}
    \State Clip: $\hat{f}_e \gets c_e$
\EndWhile
\State Set $f \gets \hat{f}$
\State Build residual graph $G_f$, determine excess/deficit sets $A'_f$, $B'_f$
\While{$|A'_f \cup B'_f| > 0$}
    \State Find projection path $p$ from $A'_f$ to $B'_f$ (or to $s/t$ if none exist)
    \State Compute $\mu_p = \min\{\mathrm{ex}_f(w), \mathrm{def}_f(w'), \min_{e \in p} c'_e\}$
    \State Send flow $\mu_p$ down $p$, update $f$, $G_f$, $A'_f$, $B'_f$
\EndWhile
\State Run Ford-Fulkerson on $G$ seeded with $f$ until optimality
\State \textbf{Output:} final flow $f^*$
\end{algorithmic}
\end{algorithm}

\begin{figure}
    \centering
    \includegraphics[width=1\linewidth]{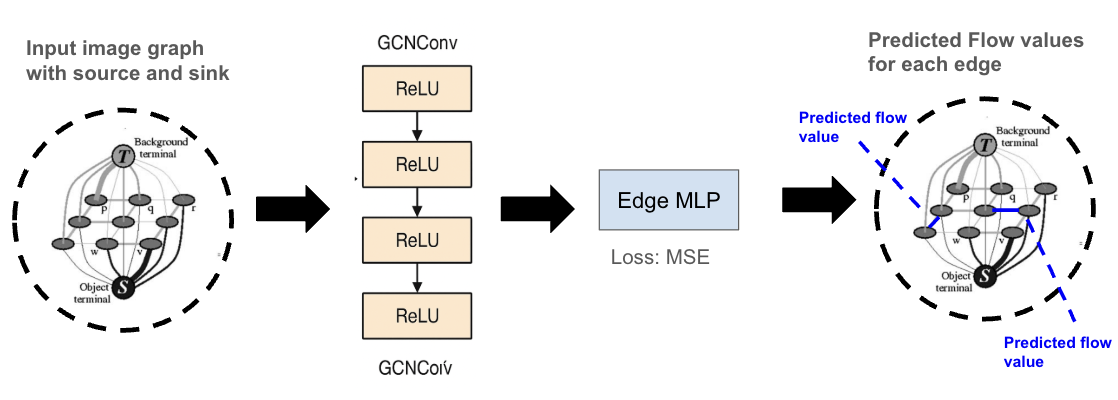}
    \caption{GCN for Algorithm 1: Graph convolution architecture for obbtaining edge-level dlow predictions from image graphs with source and sink nodes.}
    \label{fig:enter-label}
\end{figure}

\begin{figure}
    \centering
    \includegraphics[width=1\linewidth]{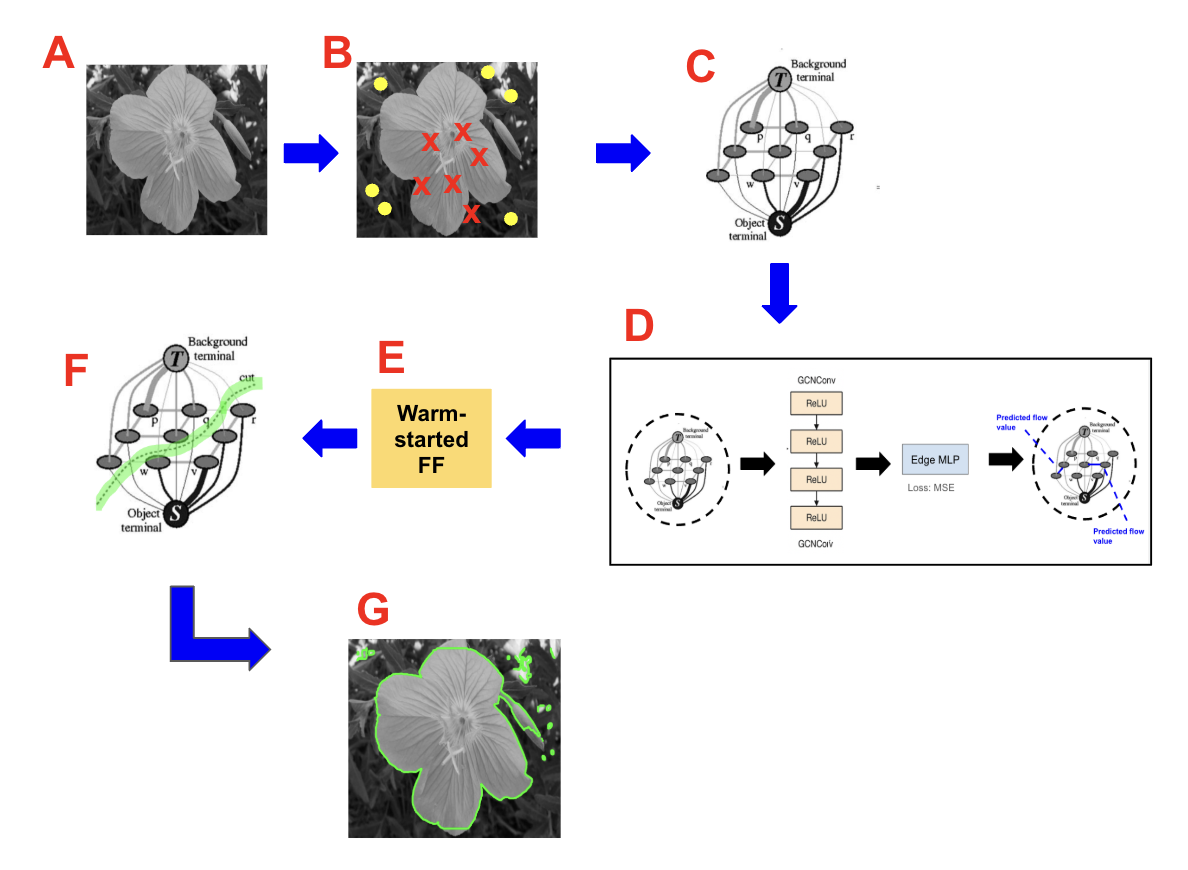}
    \caption{Algorithm 1 Implementation: Training of GCN-based Warm-start of Ford Fulkerson for Image Segmentation. (A) 60 x 60 input object images, (B) foreground and background seeds generated in image preprocessing, (3) grid-like graph representation of image with source and sink nodes and edges added, (D) GCN convolution to generate predicted edge-level flows, (E) Ford-fulkerson step with warm-started edge-level flows, (F) Ford-fulkerson result used for min-cut for segmentation, (E) Resulting segmented image. Note that F and C are adapted from a figure in \cite{image-to-graph-paper}.}
    \label{fig:enter-label}
\end{figure}
\subsection{Augmenting Path Selection With Message Passing Graph Neural Networks}

Let $G = (V, E)$ be a flow network with integral capacities $c \in \mathbb{Z}_{\geq 0}^{|E|}$, and let $f$ denote a feasible flow. The residual graph $G_{\text{res}} = (V, E_{\text{res}})$ is constructed at each iteration of the Ford--Fulkerson algorithm. To accelerate the search for augmenting paths, we introduce a predictive model based on a message-passing graph neural network (MPGNN) that assigns a score $s_{uv} \in [0, 1]$ to each edge $(u,v) \in E_{\text{res}}$, reflecting its learned importance for contributing to a high-capacity augmenting path. These scores guide a modified DFS to prioritize edge exploration and reduce the number of residual graphs required to reach convergence.

Our approach is inspired by the warm-start strategy of Davies et al., which uses a GNN to predict a seed flow that accelerates Ford--Fulkerson. However, rather than predicting flows directly, we aim to predict edge importance probabilities based on structural and flow-related features. Given residual capacity, in- and out-degree, flow history, and edge orientation, a GNN can be trained to estimate the probability that an edge will appear in a high-capacity path or cut. Formally, each edge $e \in E$ is assigned a predicted importance score $p(e)$ by the GNN. These scores are stored in a max-heap $H$ to efficiently identify promising edges.
\begin{figure}
    \centering
    \includegraphics[width=1\linewidth]{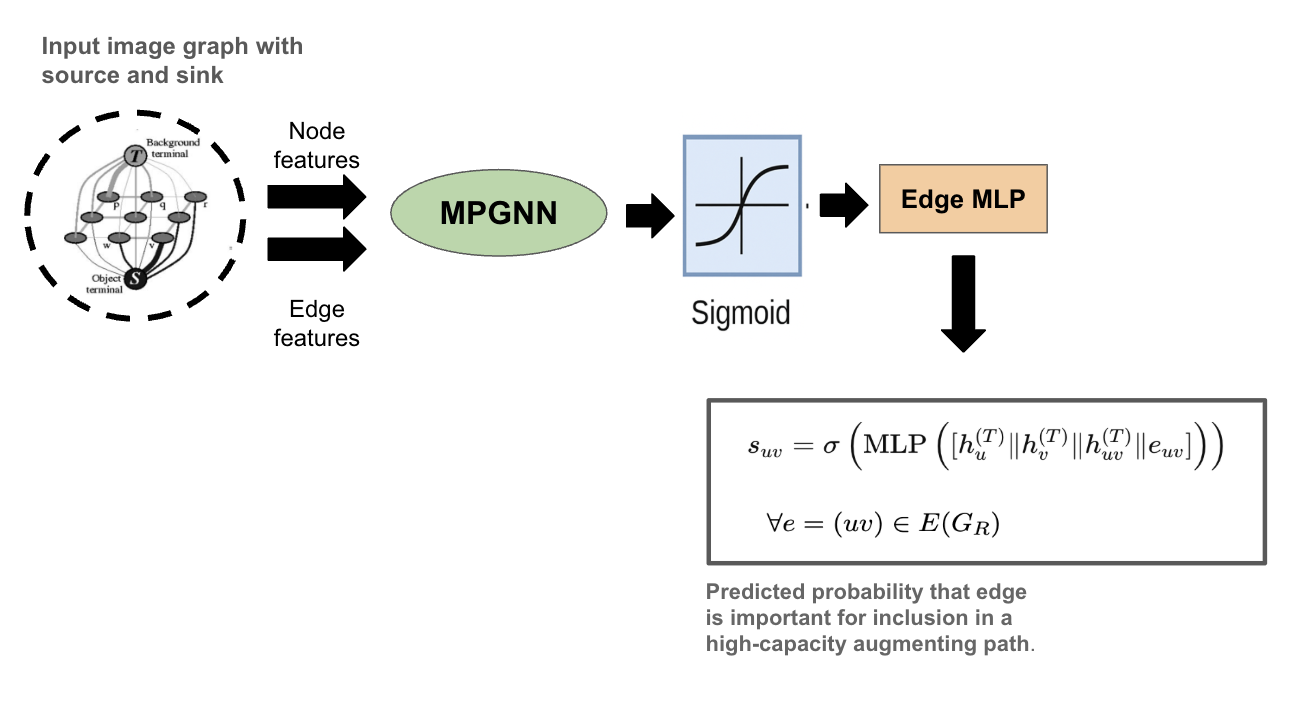}
    \caption{MPGNN Framework for Algorithm 2}
    \label{fig:enter-label}
\end{figure}
\begin{figure}
    \centering
    \includegraphics[width=1\linewidth]{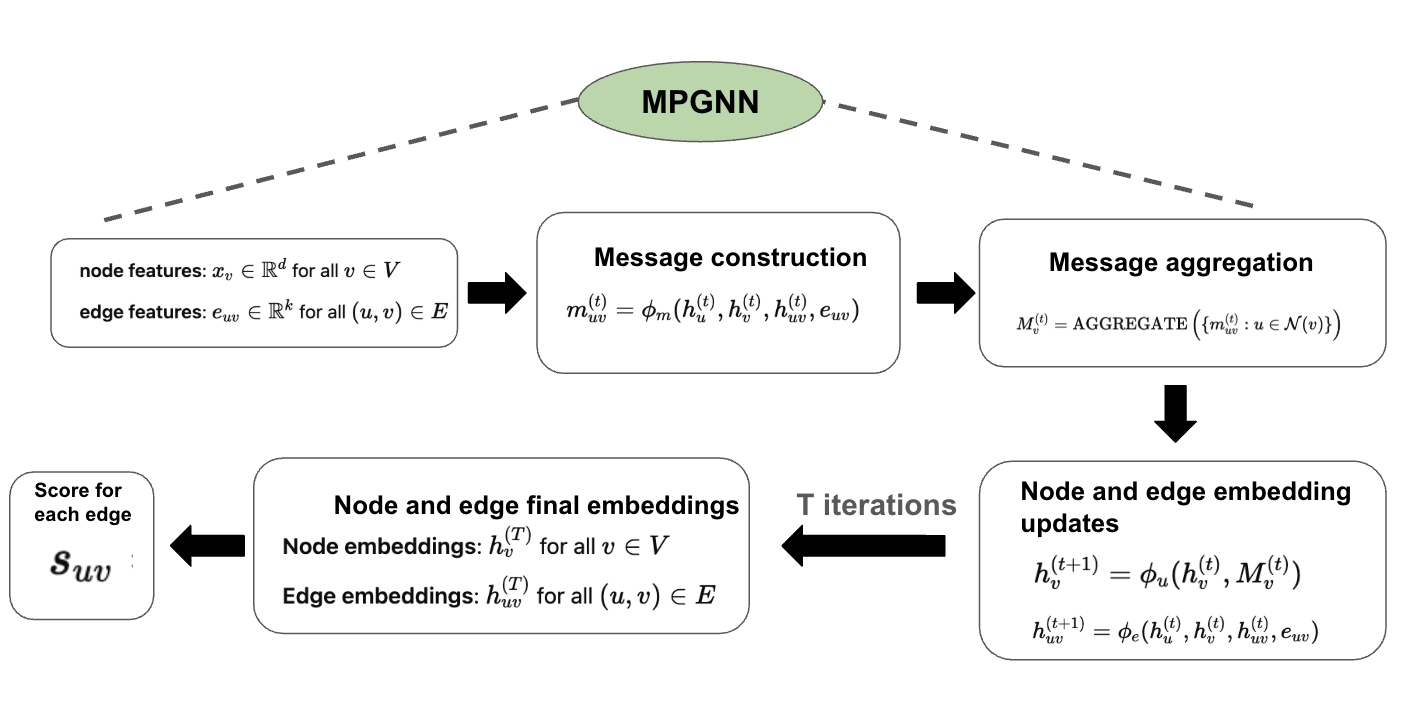}
    \caption{MPGNN-specific steps for node and edge embeddings and $s_{uv}$ generation}
    \label{fig:enter-label}
\end{figure}

The key architectural feature of our approach is the joint learning of both node embeddings and edge embeddings. Each node $v$ is initialized with features $x_v$ such as position, degree, or terminal proximity. Each edge $(u,v)$ is initialized with features $e_{uv}$ including residual capacity $r_{uv}$, original capacity $c_{uv}$, and past flow value $f_{uv}$. Rather than processing these separately, we use a unified MPGNN framework in which **node embeddings evolve in response to edge embeddings**, and **edge embeddings evolve in response to node states**.

At each round $t$, node embeddings $h_v^{(t)}$ and edge embeddings $h_{uv}^{(t)}$ are updated via a mutually interdependent mechanism. First, messages are constructed using a learnable function $\phi_m$ that fuses source and target node embeddings along with edge-specific information:
\[
m_{uv}^{(t)} = \phi_m(h_u^{(t)}, h_v^{(t)}, h_{uv}^{(t)}, e_{uv}).
\]
These messages capture local interactions shaped by both graph topology and capacity features. They are then aggregated at each receiving node to produce an updated node embedding:
\[
h_v^{(t+1)} = \phi_u\left(h_v^{(t)}, \sum_{u \in \mathcal{N}(v)} m_{uv}^{(t)}\right),
\]
where $\phi_u$ may be an MLP or GRU that fuses current and incoming information. Optionally, edge embeddings can be updated as well:
\[
h_{uv}^{(t+1)} = \phi_e(h_{uv}^{(t)}, h_u^{(t)}, h_v^{(t)}, e_{uv}),
\]
where $\phi_e$ may again be an MLP that models directionality and flow context. This back-and-forth coupling allows the network to reason holistically: node embeddings evolve in a flow-aware context, while edge embeddings evolve in a topologically sensitive manner. This mutual attention allows the model to capture critical flow phenomena such as bottlenecks, highly traversed corridors, and near-saturated cuts.

After $T$ rounds, we apply a scoring head that concatenates the final node and edge information:
\[
s_{uv} = \sigma(\text{MLP}([h_u^{(T)} \| h_v^{(T)} \| h_{uv}^{(T)} \| e_{uv}])),
\]
where $\sigma$ is a sigmoid activation producing a probability estimate. The edge score $s_{uv}$ reflects not only static edge attributes but also dynamic relational signals accumulated during message passing. These scores are cached in a priority queue and used to guide an adjusted DFS traversal.

At inference, we select the highest-scoring edge
\[
  e^* = \arg \max_{e \in E_{\text{res}}} p(e),
\]
and compute a full augmenting path by performing two DFS traversals: from the source to $v$ and from $w$ to the sink, where $e^* = (v, w)$. DFS is modified to prioritize edges in descending order of $p(e)$ from the heap $H$. This results in a prioritized path $P = P_1 \cdot e^* \cdot P_2$ along which flow is augmenting. The bottleneck capacity $\delta$ is computed, and $G_{\text{res}}$ is updated. If $e$ is saturated, it is removed from $H$.

\begin{figure}
    \centering
    \includegraphics[width=1\linewidth]{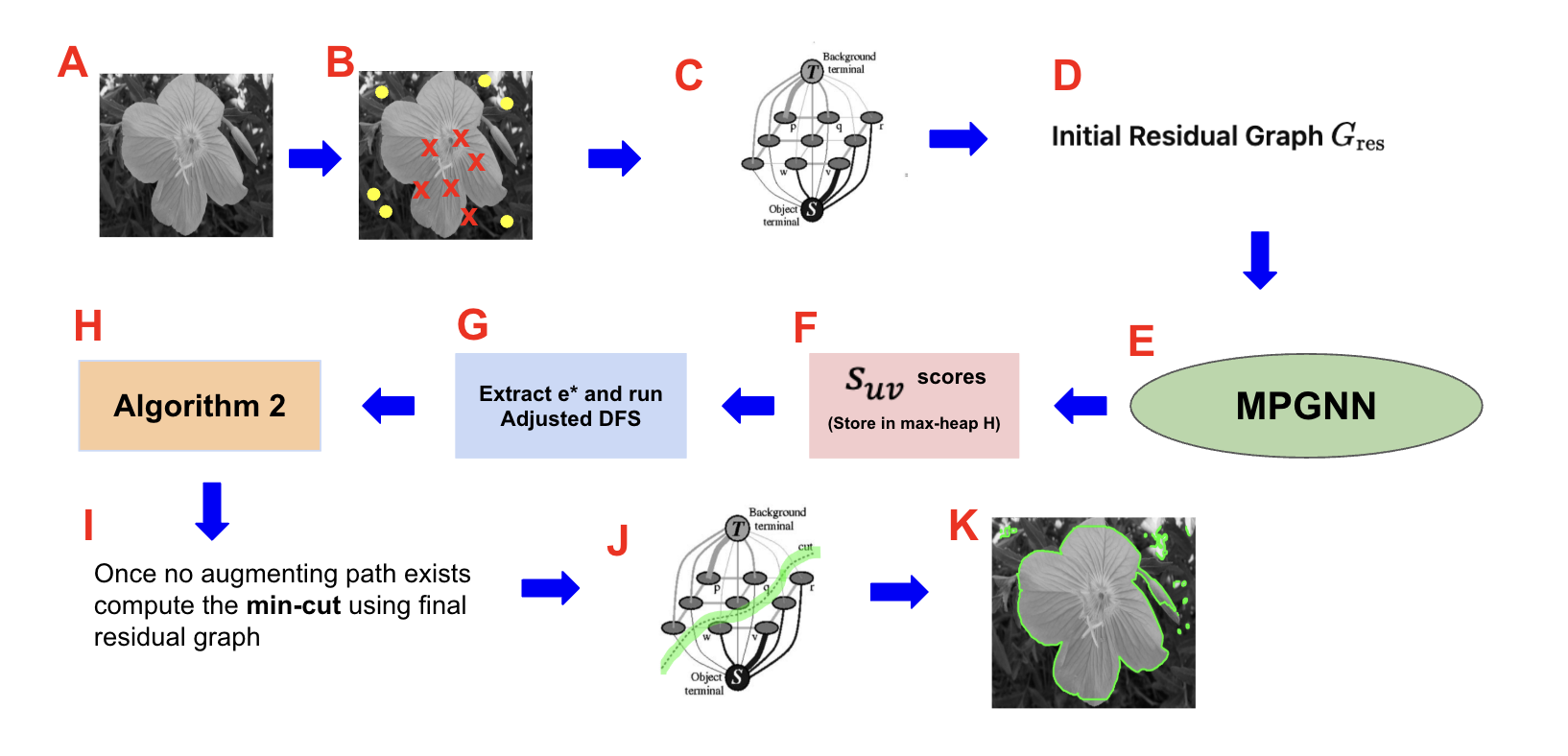}
    \caption{Algorithm 3 Implementation}
    \label{fig:enter-label}
\end{figure}
This mutual learning of node and edge embeddings is essential. Node-only GNNs cannot capture directionality, residual capacity asymmetry, or multi-hop flow structure. Edge-only approaches lack global context. MPGNNs allow both sides of the graph to evolve with awareness of the other—critical for flow-guided search over dynamically evolving residual graphs. The architecture supports non-local flow inference, capacity sensitivity, and can adapt to varying graph sizes and topologies.

Davies et al. uses predictions on graph flows as a seed to optimize the original Ford-Fulkerson algorithm via a predicted seed value for warm-start. However, there are other methods of prediction that could potentially increase flow algorithm efficiency.

The one in which we would like to explore are edge features. Given some residual capacity, in/out degree, flow history, and other factors, Graph Neural Networks can be trained to predict the probability that some edge reaches some residual capacity $\theta$ or is a part of some high capacity cut. The probability that we will try and predict is the one in which $p(e)$ represents the probability that edge $e$ is the edge in the graph $G$ contained in the min-cut with the largest capacity. In this way, we can push the most amount of flow through the graph as possible while being close to guaranteeing that one of the elements in our path will also be in the min-cut, and come closer to completing the iterations in Ford-Folkerson. We can use this information to pick paths which should cut down the number of residual graphs required to terminate the algorithm's process. It would become computationally expensive to use this GNN to predict edge probabilities for every single residual graph (think of the complete graph, for example). However, we can make use of the predictions of our initial graph to optimize path search in the FF algorithm. The following steps are a sketch of such an adjusted algorithm. 

With this prediction, each edge $e$ in some residual graph $G_R(E,V)$ is assigned a probability $p(e)$, which we keep in a max-heap $ (e,p(e)) \in H$. We calculate: 

$$ e^* = \arg \max_{e_i \in E} p(e_i)$$

Given that $e^*= (v,w)$ for $v,w \in V$, we will use and adjusted path search algorithm (ideally one that balances the shortest path and the highest capacity path from the two vertices) to find a path $P_1$ from the source node to $v$ and use the adjusted path-finding algorithm again to find a path $P_2$ from $w$ to the sink node, taking into account the capacity constraints from $P_1$. The total path $P$ is the concatenation of $P_1$, $e^*$, and $P_2$. 

The adjusted path search algorithm is a slight departure of Edmonds-Karp shortest path search, that breaks ties using the highest capacity flow. 

Finally, as usual, we calculate the bottleneck capacity $\delta$ and form the next residual graph. If in the next residual graph an edge is now 0, we remove $(e, p(e))$ from $H$, and start again. However, if a flow uses its backwards edge, we must replace it again in its relative position.  

The psuedocode for this algorithm and our adjusted search algorithm is presented below: 

\begin{algorithm}
\caption{Adjusted Edmonds-Karp with Maximum Bottleneck Tie-Breaking}
\label{alg:adjusted_edmonds_karp}
\begin{algorithmic}[1]
\Require Flow network \( G = (V, E) \), source \( s \), sink \( t \)
\Ensure Maximum flow \( f \)
\State Initialize flow \( f(u, v) \gets 0 \) for all \( (u, v) \in E \)
\While{\textbf{true}}
    \State Run Adjusted BFS with Maximum Bottleneck Tie-Breaking to find augmenting path \( p \) in residual graph \( G_f \) (standard shortest path BFS algorithm with an additional queue for tracking the bottleneck once a path is found)
    \If{no augmenting path found}
        \State \textbf{break}
    \EndIf
    \State Let \( c_f(p) \gets \min\{c_f(e) | e \in p\} \)
    \ForAll{edges \( (u, v) \in p \)}
        \State \( f(u, v) \gets f(u, v) + c_f(p) \)
        \State \( f(v, u) \gets f(v, u) - c_f(p) \)
    \EndFor
\EndWhile
\State \Return \( f \)
\end{algorithmic}
\end{algorithm}

\begin{algorithm}[H]
\caption{GNN-Assisted Ford-Fulkerson (with Adjusted Edmonds-Karp)}
\label{alg:gnn_ff}
\begin{algorithmic}[1]
\State \textbf{Input:} $G_R = (V, E),\ s, t,\ p(e)\ \forall e \in E$
\State $H \gets \text{max-heap of } (e, p(e))$
\State $F \gets 0$
\While{$\exists$ augmenting path in $G_R$}
    \State $e^* = \arg\max_{e \in E} p(e),\ e^* = (v,w)$
    \State $P_1 \gets \text{Adjusted-EdmondsKarp}(s, v, H)$
    \State $P_2 \gets \text{Adjusted-Edmonds-Karp}(w, t, H)$
    \If{$P_1 \neq \emptyset\ \land\ P_2 \neq \emptyset$}
        \State $P \gets P_1 \cup \{e^*\} \cup P_2$
        \State $\delta \gets \min_{e \in P} c(e)$
        \State Augment flow along $P$ by $\delta$
        \State Update $G_R$
    \EndIf
    \State Remove $(e, p(e)) \in H$ if $e \notin G_R$
\EndWhile
\State \Return $F$
\end{algorithmic}
\end{algorithm}

We apply this architecture once to the initial residual graph and use the learned edge probabilities to guide the entire FF process via Algorithm~\ref{alg:gnn_ff} and the adjusted DFS Algorithm. This design avoids recomputing predictions at each step, which would be computationally expensive.

The resulting pipeline is as follows: an image is converted to a flow graph, node and edge features are extracted, a single MPGNN inference produces edge scores, and Ford--Fulkerson proceeds with GNN-guided augmenting paths. The algorithm yields a segmentation based on the min-cut, with substantially fewer augmentations required compared to uninformed DFS, while preserving optimality.\cite{shalev2014understanding}
\section{Future Directions}
\subsection{Suggested Further Research}

As discussed in our introduction, this paper is a stepping stone to extensive further experimental and theoretical work.

\begin{enumerate}
    \item \textbf{Ford--Fulkerson warm-start experimentation with a GCN framework.}
    \begin{itemize}
        \item Proposed experimental design: start by processing only 500 images at $60\times 60$ pixels, generate foreground/background seeds, and develop a simple test metric to verify that source and sink nodes are added correctly.
    \end{itemize}

    \item \textbf{Ford--Fulkerson augmenting-path experimentation with an MPGNN framework.}
    \begin{itemize}
        \item Proposed experimental design: start by processing only 500 images at $60\times 60$ pixels, generate foreground/background seeds, and develop a simple test metric to verify that source and sink nodes are added correctly.
    \end{itemize}

    \item \textbf{Proof of Algorithm~3 Runtime.}
    \begin{itemize}
        \item  Typically, Edmonds--Karp has worst-case runtime $O(|V||E|^2)$. If the algorithm predicts with some error $\epsilon$, the idea is that under perfect conditions (a perfect predictor with $\epsilon = 0$) we can finish in $k$ iterations, where $k$ is the size of the min-cut, while under an adversarial predictor the algorithm is equivalent to running Edmonds--Karp as normal. A next step would be to derive the expected runtime as a function of prediction error; potential back-edges in the residual graph make this worthy of further research.
    \end{itemize}
\end{enumerate}

\subsection{Theoretical Next Steps}

We want to prove the theoretical accuracy bounds required for the model in order for this Bi-directional Ford Fulkerson algorithm (Algorithm 2) with prediction to achieve max-flow in a reduced time-complexity. Even though the output of our GNN model given a graph and an edge as input is a real probability, what the model is effectively measuring is the ordering of the edges based on $p$ (as seen in the heap $H$). Thus, given some ground truth permutation $\sigma'$, we intend to show that given some predicted permutation $\hat{\sigma}$ from the GNN, that if the the "distance" between them is sufficiently small, then the adjusted FF algorithm will require less iterations, some number defined by this "distance". 

There are several choices of metrics to measure distance between but the most obvious is the Cayley distance, \cite{de2012complexity} which is defined as follows, with $\sigma,\hat{\sigma}$ as two permutations. 

$$d_C(\sigma,\hat{\sigma}) = n - c(\sigma^{-1}\hat{\sigma})$$

 where $c(\sigma)$ being defined as the number of cycles of a permutation and $d_C(\sigma,\hat{\sigma})$ is the minimum number of transpositions required to transform $\sigma$ into $\hat{\sigma}$.

For our purposes, edges at the top of the ordering, the ones that have the highest predicted probability, carry more of an importance than those further down the line (as those near the top are more likely to become $e^*$ in some residual graph). 
In our context, edges at the top of the ordering (i.e., with the highest predicted probabilities) are more important than those further down. To capture this, we define a weighted Cayley distance as follows.

Let $w : \{1,2,\ldots, n\} \rightarrow \mathbb{R}_{>0}$ be a weight function satisfying
\[
w(1) \geq w(2) \geq \cdots \geq w(n).
\]
If we represent a permutation $\pi \in S_n$ in one-line notation, deviations in the top positions should incur a higher penalty. Thus, we define the weighted Cayley distance between the ground truth permutation $\sigma$ and the predicted permutation $\hat{\sigma}$ as follows.

$$d_{WC}(\sigma,\hat{\sigma}) = \min \left\{ \sum_{j=1}^{k} w(i_j) : \; \hat{\sigma} = \tau_{i_k}\cdots\tau_{i_1}\circ \sigma \right\}$$

where some $\tau_{i}$ is a transposition on the $i$th position and the minimum is taken over all possible sequences of transpositions that transform $\sigma$ into $\hat{\sigma}$.

Our goal is to show that if this weighted Cayley distance $d_{WC}(\sigma,\hat{\sigma})$ is bounded by some sufficiently small value, then the predicted ordering of edges is close to the ground truth, which then implies that the paths selected by the adjusted Ford-Fulkerson algorithm are nearly optimal. Thus, we will formalize the relationship between the weighted permutation distance and the quality of the augmenting paths chosen, derive bounds on the improvement in the flow augmentation per iteration as a function of this distance metric, and finally then demonstrate that the total number of iterations required to achieve our max-flow value is reduced compared to the standard Ford-Fulkerson algorithm, and deduce the time complexity from this analysis. Additionally, we propose further experimental and theoretical work on the following combined algorithm: 

Algorithm 4 integrates two types of GNN predictions. First, a GNN is used to generate a preliminary flow which is then projected to feasibility. Second, a separate GNN predicts edge priorities in the residual graph, allowing for more informed augmenting path selection in the Ford-Fulkerson procedure. This hybrid approach is designed to benefit from both faster initialization and smarter updates, ideally reducing total iterations required.

\begin{algorithm}[H]
\caption{Warm-Started GNN-Assisted Ford-Fulkerson}
\label{alg:combined}
\begin{algorithmic}[1]
\State \textbf{Input:} Graph $G = (V, E, c)$, trained flow predictor $\mathcal{G}_\theta$, edge-priority predictor $\mathcal{P}_\phi$
\State Predict flow: $\hat{f} \gets \mathcal{G}_\theta(G)$
\While{$\exists$ edge $e$ with $\hat{f}_e > c_e$}
    \State Clip: $\hat{f}_e \gets c_e$
\EndWhile
\State Set $f \gets \hat{f}$
\State Build residual graph $G_f$, determine excess/deficit sets $A'_f$, $B'_f$
\While{$|A'_f \cup B'_f| > 0$}
    \State Find projection path $p$ from $A'_f$ to $B'_f$ (or to $s/t$ if none exist)
    \State Compute $\mu_p = \min\{\mathrm{ex}_f(w), \mathrm{def}_f(w'), \min_{e \in p} c'_e\}$
    \State Send flow $\mu_p$ down $p$, update $f$, $G_f$, $A'_f$, $B'_f$
\EndWhile

\State Predict edge probabilities: $p(e) \gets \mathcal{P}_\phi(G_f)$ for all $e \in E$
\State Initialize max-heap $H$ with tuples $(e, p(e))$
\While{$\exists$ augmenting path in $G_f$}
    \State $e^* = \arg\max_{e \in G_f} p(e),\ e^* = (v,w)$
    \State $P_1 \gets \text{AdjustedDFS}(s, v, H)$
    \State $P_2 \gets \text{AdjustedDFS}(w, t, H)$
    \If{$P_1 \neq \emptyset\ \land\ P_2 \neq \emptyset$}
        \State $P \gets P_1 \cup \{e^*\} \cup P_2$
        \State $\delta \gets \min_{e \in P} c(e)$
        \State Augment flow $f$ along $P$ by $\delta$
        \State Update residual graph $G_f$ and probabilities $p(e)$
    \EndIf
    \State Remove $(e, p(e))$ from $H$ if $e \notin G_f$
\EndWhile
\State \Return final flow $f$
\end{algorithmic}
\end{algorithm}
A next theoretical step would be to integrate this algorithm, and perform rigorous testing compared with traditional Ford-Fulkerson. Moreover, it would be useful to formalize the algorithm sketch for the run-time of the algorithm with edge probability predictions

\section{Conclusion}

We introduced a set of novel algorithms in the space of algorithms with predictions which utilized predictions on edge probabilities for graph families. In order to determine if this was a valid prediction, just as in the warm-start algorithm with predictive flows, we proved the PAC-learnability of edge probabilities for some general metrics. Then, we went on to define a bidirectional algorithm that takes in predictions for sufficient edges, and traces paths from the predicted edge to the source and the sink respectively using an adjusted version of Edmonds-Karp. Finally, we introduced GNN-frameworks tailored to the warm-start algorithm for flows, which was a Graph Convolutional Network, and a framework tailored to path selection with learned edge probabilities, which was a message-passing graph neural network.  We encourage experimental analysis of these results as a next step and further theoretical research on algorithmic runtimes.

\bibliographystyle{plainnat}  
\bibliography{ref}

\end{document}